\documentclass[journal, letter]{IEEEtran}
\usepackage{graphicx} 
\usepackage{amsmath}
\usepackage{amssymb}
\usepackage{amsthm}
\usepackage{xcolor}
\usepackage{soul}
\usepackage{bm}
\usepackage{hyperref}
\usepackage{algorithm}
\usepackage{algorithmic}
\usepackage{mathtools}
\usepackage{diagbox}

\usepackage{tikz}
\usetikzlibrary{positioning}
\usetikzlibrary{decorations.pathreplacing, arrows.meta, calc}

\newtheorem{property}{Property}
\newtheorem{corollary}{Corollary}

\newcommand{\V}{\bm{V}}
\newcommand{\Vi}{\bm{V}^{-1}}
\newcommand{\vvec}{\bm{v}}
\newcommand{\vlie}{\mathfrak{v}}
\newcommand{\W}{\bm{W}}

\newcommand{\dlie}{\mathfrak{d}}

\newcommand{\D}{\bm{D}}

\newcommand{\Exp}{\mathrm{Exp}}
\newcommand{\Log}{\mathrm{Log}}

\DeclareMathOperator*{\argmin}{arg\,min}
\newcommand{\pfrac}[2]{{\frac{\partial #1}{\partial #2}}}

\title{Improved Representation of Matrix Lie Group Operations through Tensor Notation}
\author{Clark N. Taylor\\Air Force Institute of Technology\\WPAFB, OH, USA 45433\\clark.n.taylor@gmail.com}
\date{}

\begin{document}

\maketitle


\begin{abstract}
    Several recent papers have demonstrated the utility of using Lie groups within estimation problems, yielding improved accuracy and consistency.  This paper introduces a new tool for describing operations with matrix Lie groups: tensors and the Einstein summation notation.  While tensors and Einstein notation are well-known in other research fields, applying this mathematical notation to represent and compute matrix Lie derivatives is novel.  More importantly, this new notation greatly clarifies the derivatives and operations necessary to work with matrix Lie Groups in (gradient-based) estimation frameworks.  Therefore, the main contribution of this paper is not a new capability, but a more perspicuous mathematical notation for working with matrix Lie groups.
\end{abstract}

\begin{IEEEkeywords}
    Lie groups, tensors, factor graph optimization, matrix derivatives
\end{IEEEkeywords}

\section{Introduction}
One of the fundamental problems in robotics, space vehicle control, aircraft control, and other fields is estimating rotation, usually in conjunction with other quantities (e.g., location, velocity, etc.)  Because modern estimation frameworks such as the Kalman filter and factor graphs rely on gradients, practical rotation estimation requires the ability to compute derivatives -- both of external quantities with respect to (w.r.t.) the rotation and vice versa.  Rotations, however, are particularly difficult to take derivatives of.  For example, Euler angles are often used to represent rotations, but all have singularities where the derivative is undefined (gimbal lock).  In addition to the singularity problem, there is the more general problem that rotations are typically \emph{applied} within a linear equation using matrix multiplication.  But arbitrary changes to a rotation matrix lead to a matrix that is no longer a valid rotation matrix.  This fundamental conflict between taking derivatives of an equation where rotations are applied and the limited set of matrices that are valid rotation matrices makes derivatives with rotation matrices significantly more complex than derivatives with traditional linear equations.

To overcome this problem, Lie algebras on the rotation manifold can be used.  This enables both the proper computation of derivatives and a better understanding of the uncertainty associated with the computed pose estimates ~\cite{barfoot2014associating,mangelson2020characterizing,begelfor2005put}.  Because of these properties, hundreds of papers have been published which utilize Lie groups in state estimation problems.  While a review of these papers is beyond the scope of this paper, we note several tutorials have been published addressing the mathematics of Lie groups in general\footnote{This paper generally follows the notation introduced in~\cite{sola2018micro} to discuss Lie groups.}~\cite{sola2018micro, hertzberg2013integrating, drummond2003computing}.  

This paper further limits itself to \emph{matrix} Lie groups, though this too has been a field of significant activity.  One particular example that has shown improved estimation performance due to the use of matrix Lie groups is the preintegration of inertial sensors.  Early work used the SO3 Lie group to represent rotations, enabling preintegration of inertial sensors~\cite{forster2016manifold,yang2020analytic}.  Later works extended the preintegration using the $SE_2(3)$ ~\cite{lin2025closed,brossard2021associating,wang2024mavis} and Galilean~\cite{delama2024equivariant} Lie groups, all examples of possible \emph{matrix Lie groups}.

There are two fundamental difficulties to using  derivatives with matrix Lie groups.  The first is a representational difficulty when using the derivative of matrices in more complex equations.  Consider this equation, where the matrix $\V$ is a matrix in a Lie group:
\begin{equation}
    \bm{x} = \bm{AVBy}\label{eq:matrixDerivEq}.
\end{equation}
While the derivative of $\bm{x}$ w.r.t. $\bm{y}$ is straightforward to compute, yielding the Jacobian matrix $\bm{AVB}$, the derivative of $\bm{x}$ w.r.t. $\V$ is not as straightforward. Using a Jacobian matrix implicitly assumes the derivative of a vector w.r.t. a vector, while matrix Lie groups will require derivatives of a matrix w.r.t. a vector (and vice versa).  While the Jacobian of a matrix w.r.t. a vector can be computed by ``vectorizing'' matrices, this destroys the structural relationship of the matrix to other elements in more complex mathematical equations, making other operations more mathematically ``opaque'' to the user.

The second difficulty computing derivatives with Lie groups is the concept that (local) movement on the manifold can be expressed as a vector from the tangent plane, generally of smaller dimension than the number of elements in the matrix Lie group itself.  For example, the SE3 group is typically represented as a $4\times 4$ matrix, while local movement on the manifold can be expressed using a 6-element vector.  Because of this distinction between \emph{how} movement can occur (a vector) and the space in which the movement occurs (a matrix manifold), the typical assumptions for using a Jacobian matrix to represent all derivatives is implicitly broken.  

Note that there are several papers discussing how to compute specific derivatives of matrix Lie groups~\cite{blanco2021tutorial,nurlanovexploring,gallego2015compact}.  Among these papers, however, there is no standard notation for how derivatives are represented and operated with.  In \cite{gallego2015compact}, the value for each $\pfrac{\bm{R}}{\bm{v}_i}$ is given, implying the derivative of a matrix ($\bm{R}$) with respect to each element in the vector $\bm{v}$.  In \cite{blanco2021tutorial} (Equation 10.11), the derivative of a vector w.r.t. a matrix was expressed as:
\[
\frac{\partial \log(\mathbf{R})^{\vee}}{\partial \mathbf{R}} \bigg|_{3 \times 9} = 
\left( \begin{array}{ccc|ccc|ccc}
a_1 & 0 & 0 & 0 & a_1 & b & 0 & -b & a_1 \\
a_2 & 0 & -b & 0 & a_2 & 0 & b & 0 & a_2 \\
a_3 & b & 0 & -b & a_3 & 0 & 0 & 0 & a_3
\end{array} \right) 
\]
with the comment that the 9 components are in column-major order.  Note that the dividing lines in this notation do \emph{not} denote 3 different matrices, but rather a division between rows of a matrix, where each (9-element) row represents a complete matrix.  Such notations can hide the complexity of the matrix/vector derivative relationship or can easily lead to errors in communication of the proper derivatives.  Furthermore, performing operations with these derivatives in larger linear systems is not explicitly defined and must be created in a bespoke manner.

In~\cite{sola2018micro}, derivatives are defined using a circle-plus and circle-minus notation, but the meaning of circle-plus/minus depends on what group you are working with.  While this notation is very concise, the circle-plus/minus in the same equation refer to different groups, making the meaning of the operations implicit, rather than explicit. 


Using tensors and Einstein summation notation to represent these complex derivatives and their interactions with the rest of the system provides a potential solution to these difficulties.  The use of tensors has several advantages, including:
\begin{itemize}
    \item Clarity of notation
    \item Pre-existing computational notation for integration with linear systems
    \item Computational software support
\end{itemize}
Furthermore, we introduce a tensor-based projection operation that simplifies the computation of derivatives needed for matrix Lie groups in an estimation framework.  

The rest of the paper is organized as follows.  In Section~\ref{s:tensors}, we introduce tensors and the Einstein notation for working with tensors.  We also introduce a pseudo-inverse for tensors that will simplify some important operations required to work with matrix Lie groups.  Section~\ref{s:mlgroups} provides a brief review of matrix Lie groups and their key operations.  Section~\ref{s:TandMLGs} discusses how tensor notation can be used to clearly express matrix Lie group operations, particularly in the context of gradient-based optimization algorithms.  Section~\ref{s:results} walks through two examples of using tensors to operate on a matrix Lie group.  The code used to perform the operations can be found at \url{https://github.com/cntaylor/tensor\_matrix\_lie\_group\_example}.  Section~\ref{s:concl} concludes the paper.

\section{Tensor Background}
\label{s:tensors}
A tensor is an $n$-dimensional, regular grid of mathematical elements.  Because tensors can be any dimension, scalars (0-dimensional), vectors (1-dimensional), and matrices (2-dimensional) are all special cases of a tensor.  To reference the individual elements of a tensor, subscripts are often used: $\mathcal{T}_{i,j,k}$ refers to a specific element in a 3D tensor at index locations $i$, $j$, and $k$.

\subsection{Tensor Operations}
In this paper, we are particularly interested in two operations that can be performed with tensors: contraction and element-wise multiplication.  Contraction is performing a dot product across a particular dimension of a tensor.  Consider two tensors of size $4\times 5 \times 3$ ($\mathcal{T}$) and $2\times 5$ ($\mathcal{U}$).  If we contract the second dimension of $\mathcal{T}$ with the second dimension of $\mathcal{U}$, we have a an output $4\times 3 \times 2$ tensor $\mathcal{V}$ defined as follows:

\begin{equation}
\mathcal{V}_{i,k,\ell} = \sum_{j=1}^5 \mathcal{T}_{i,j,k}\mathcal{U}_{\ell,j}\quad\forall i,k,\ell
\end{equation}

Note that matrix multiplication is a tensor contraction of two, 2D tensors.  In matrix multiplication, however, which dimensions of the tensors are being contracted are implicitly defined by the placement of the matrices next to each other.  Tensor contraction is a more general operation wherein the dimensions on which the dot product is performed  can be arbitrarily specified.  (This concept is expanded on in Table~\ref{tab:matrixTensorNotation} in the section on Einstein notation.)

Element-wise multiplication can also be performed across similar-sized dimensions in a tensor. Using the same two tensors $\mathcal{T}$ and $\mathcal{U}$ described above, performing an element-wise multiplication across the 2nd dimension of $\mathcal{T}$ and $\mathcal{U}$ leads to a 4D output tensor $\mathcal{W}$:

\begin{equation}
    \mathcal{W}_{i,j,k,\ell} = \mathcal{T}_{i,j,k}\cdot \mathcal{U}_{\ell,j} \quad\forall i,j,k,\ell
\end{equation}

Note that for both contraction and element-wise multiplication to occur, the dimensions being operated on in the two tensors must be the same size.  Also note that for two given input tensors, which dimensions \emph{can} have a contraction or element-wise multiplication occur is not unique.  Consider two tensors $\mathcal{T}$ and $\mathcal{U}$ of sizes $4\times 3 \times 3$ and $3 \times 4$, respectively.  Table~\ref{tab:valid_operators} shows which pairing of dimensions are valid for the contraction and element-wise multiplications, and which are not.

\begin{table}[]
    \caption{This table shows which combinations of dimensions are valid for performing tensor operations on.  A check-mark ($\checkmark$) means an operation could be performed, while an '$\times$' means an operation would be invalid.}
    \label{tab:valid_operators}
    \centering
    \begin{tabular}{|l|c|c|}
        \hline
        \diagbox[width=3.8cm]{$\mathcal{T}$ ($4\times 3 \times 3$)}{$\mathcal{U}$ ($3 \times 4$)} & dim 1 (size 3) & dim 2 (size 4) \\\hline
        dim 1 (size 4) & $\times$       & $\checkmark$   \\\hline
        dim 2 (size 3) & $\checkmark$   & $\times$       \\\hline
        dim 3 (size 3) & $\checkmark$   & $\times$       \\\hline
    \end{tabular}
\end{table}

\subsubsection{Contraction Operation Utility}
There are two particular uses of the contraction operation that are worth highlighting.  First, note that matrix multiplication can be represented as a contraction.  Consider two matrices (2D tensors -- $\mathcal{T}$ and $\mathcal{U}$), of size $n\times m$ and $o\times p$.  If $m$ and $o$ are equal, then the tensor contraction operation
$$
\mathcal{V}_{i,k} = \sum_{j=1}^m \mathcal{T}_{i,j} \mathcal{U}_{j,k}\quad \forall i,k
$$
is the same as matrix multiplication.  

Second, note that when working with derivatives over multiple variables, the chain rule involves a dot-product or contraction.  For example, consider two functions: $f$ and $g$ defined below, and the composite function which chains them together.
\begin{equation}
    \begin{aligned}
    f(u,v) &\rightarrow y\\
    g(x) &\rightarrow u,v\\
    h(x) &= y = f(g(x))\\
    \pfrac{h(x)}{x} &= \pfrac{f(x)}{u}\pfrac{g(x)[1]}{x} + \pfrac{f(x)}{v}\pfrac{g(x)[2]}{x}
    \end{aligned}
\end{equation}
The dot-product of the two partial derivatives yields the total derivative required as shown in the last equation, where $g(x)[1]$ represents the first output of the $g(x)$ function.  Note that the contraction operation with tensors extends this ability to combine partial derivatives in a very flexible and programmatic way.

\subsection{Einstein Notation}
Rather than writing out summations and explicitly denoting which indices the output will be over using the $\forall$ symbol, \emph{Einstein notation} can be used to succinctly describe operations with tensors.  Consider Table~\ref{tab:einstein_example} which shows all of the possible operations (including both contraction and element-wise multiplication) listed in Table~\ref{tab:valid_operators}.  Note how the basic structure of each operation is the same $\mathcal{V}=\mathcal{T}\mathcal{U}$, but the indices on each tensor are used to determine what operation is performed.  In this table, $\mathcal{T}$ was always labeled with indices $i,j,k$, but the actual labels are not important.  Only the ordering of the labels and how those labels correspond with the labels on other tensors determines the operation to be performed.
\begin{table}[]
    \caption{Corresponding with Table~\ref{tab:valid_operators}, this table shows the Einstein summation notation for all possible combinations of $\mathcal{T}$ ($4\times 3 \times 3$) and $\mathcal{U}$($4\times 3$).  Note how the indices implicitly denote what operations occur without using an explicit summation.}
    \label{tab:einstein_example}
    \centering
    \begin{tabular}{|l|c|c|}
    \hline
    &\textbf{Contraction}&\textbf{Element-wise}\\\hline
    $\mathcal{T}$ (2nd dim) \& $\mathcal{U}$ (1st dim)  & $\mathcal{V}_{ik\ell} = \mathcal{T}_{ijk}\mathcal{U}_{j\ell} $ & $\mathcal{V}_{ijk\ell} = \mathcal{T}_{ijk}\mathcal{U}_{j\ell}$\\\hline
    $\mathcal{T}$ (3rd dim) \& $\mathcal{U}$ (1st dim)  & $\mathcal{V}_{ij\ell} = \mathcal{T}_{ijk}\mathcal{U}_{k\ell}$ & $\mathcal{V}_{ijk\ell} = \mathcal{T}_{ijk}\mathcal{U}_{k\ell}$ \\\hline
    $\mathcal{T}$ (1st dim) \& $\mathcal{U}$ (2nd dim)  & $\mathcal{V}_{jk\ell} = \mathcal{T}_{ijk}\mathcal{U}_{\ell i}$ & $\mathcal{V}_{ijk\ell} = \mathcal{T}_{ijk}\mathcal{U}_{\ell i}$ \\\hline
    \end{tabular}
\end{table}

To further illustrate the usage of Einstein notation, Table~\ref{tab:matrixTensorNotation} shows how the product of two similar sized matrices ($\bm{A}$ and $\bm{B}$) can be expressed in traditional matrix notation and in Einstein notation, placing $\bm{A}$ and $\bm{B}$ first.  Note that through the rest of this paper, ``Einstein notation'' and ``tensor notation'' will be used interchangeably.

\begin{table}[]
    \caption{A comparison of different matrix multiplication operations and their expression in Einstein notation.  The first column uses traditional matrix multiplication notation, the second column uses tensor notation with $\bm{A}$ placed first, and the third column with $\bm{B}$ first.}
    \label{tab:matrixTensorNotation}
    \centering
    \begin{tabular}{|c|c|c|}
        \hline
        &\multicolumn{2}{c|}{\textbf{Tensor Notation}}\\
        \textbf{Matrix Notation} & \textbf{$\bm{A}$ first} & \textbf{ $\bm{B}$ first} \\\hline
        $\bm{C} = \bm{AB}$ & $\bm{C}_{ik} = \bm{A}_{ij}\bm{B}_{jk}$ & $\bm{C}_{ik} = \bm{B}_{jk}\bm{A}_{ij}$\\\hline
        $\bm{C} = \bm{BA}$ & $\bm{C}_{ik}= \bm{A}_{jk}\bm{B}_{ij}$ & $\bm{C}_{ik}= \bm{B}_{ij}\bm{A}_{jk}$ \\\hline
        $\bm{C} = \bm{A}^\top\bm{B}$ & $\bm{C}_{ik}= \bm{A}_{ji}\bm{B}_{jk}$ & $\bm{C}_{ik}= \bm{B}_{jk}\bm{A}_{ji}$\\\hline
        $\bm{C} = \bm{B}^\top\bm{A}$ & $\bm{C}_{ik}= \bm{A}_{jk}\bm{B}_{ji}$ & $\bm{C}_{ik}= \bm{B}_{ji}\bm{A}_{jk}$ \\\hline
    \end{tabular}
\end{table}

In the remainder of this paper, the utility of this notation for defining derivatives on a Lie Group should become apparent.

\subsubsection{Computational implementations of Einstein notation}
In addition to explicitly describing tensor contraction in mathematical formulas, several computational tools are available in modern programming languages to support Einstein notation.  In Python, the numpy library implements the \texttt{einsum} function, where the indices to be operated on are specified as a string, followed by the tensors themselves.  Examples of using the \texttt{einsum} function to optimize over rotation matrices are included in the github repository associated with this paper\footnote{\url{https://github.com/cntaylor/tensor\_matrix\_lie\_group\_example}}. In addition to Python, einsum implementations are available in C++ (using the ``Einsums'' library\cite{einsums}), Rust, R, and Julia.

\subsection{Tensors and derivatives of matrices}
As mentioned in the introduction, one of the difficulties of working with a matrix Lie group is that the derivative of matrices are difficult to handle in a regular matrix-based linear equation. The Kronecker product and vectorization of the matrix can be used to compute the derivatives, but it leads to an equation that differs substantially in form from the original equation. Tensors, on the other hand, make the derivatives of matrices simple to represent using the following property.
\begin{property}
\label{prop:simpleMatrixDeriv}
Consider the tensor equation:
\begin{equation}
    \mathcal{A}_{\mathcal{IJL}} = \mathcal{B}_{\mathcal{IKL}}\mathcal{C}_{\mathcal{JKL}}
\end{equation}
where $\mathcal{I}$ represents a set of indices unique to tensor $\mathcal{B}$, $\mathcal{J}$ a set of indices unique to $\mathcal{C}$, $\mathcal{K}$ a set of indices across which contraction is performed, and $\mathcal{L}$ a set of indices across which element-wise multiplication will occur.  If the partial derivatives of this equation are desired w.r.t. another tensor $\mathcal{D}$ with indices $\mathcal{M}$, this can be represented as:
\begin{equation}
    \pfrac{\mathcal{A}_{\mathcal{IJL}}}{\mathcal{D}_\mathcal{M}} = 
    \pfrac{\mathcal{B}_\mathcal{IJL}}{\mathcal{D}_\mathcal{M}}\mathcal{C}_\mathcal{JKL} + \mathcal{B}_\mathcal{IJL}\pfrac{\mathcal{C}_\mathcal{JKL}}{\mathcal{D}_\mathcal{M}}
\end{equation}
\end{property}
\begin{proof}
Consider the equation for an individual element of $\mathcal{A}$, where $\bar{\mathcal{I}}$ represents a value for all the indices within index set $\mathcal{I}$.  In this case, $\mathcal{A}_{\bar{\mathcal{I}},\bar{\mathcal{J}},\bar{\mathcal{L}}}$ represents a scalar value, and the generic matrix equation above can be re-written as:
\begin{equation}
\mathcal{A}_{\bar{\mathcal{I}},\bar{\mathcal{J}},\bar{\mathcal{L}}} = \sum_{\bar{\mathcal{K}}=\mathcal{K}} \mathcal{B}_{\bar{\mathcal{I}}\bar{\mathcal{K}}\bar{\mathcal{L}}}\mathcal{C}_{\bar{\mathcal{J}}\bar{\mathcal{K}}\bar{\mathcal{L}}} \quad \forall \bar{\mathcal{I}}\bar{\mathcal{J}}\bar{\mathcal{L}}
\end{equation}
Because both $\mathcal{B}_{\bar{\mathcal{I}}\bar{\mathcal{K}}\bar{\mathcal{L}}}$ and $\mathcal{C}_{\bar{\mathcal{J}}\bar{\mathcal{K}}\bar{\mathcal{L}}}$ are scalars, the derivative with respect to specific (scalar) values within $\mathcal{D}$ will be:
\begin{equation}
\pfrac{\mathcal{A}_{\bar{\mathcal{I}}\bar{\mathcal{J}}\bar{\mathcal{L}}}}{\mathcal{D}_{\bar{\mathcal{M}}}} = \sum_{\bar{\mathcal{K}}=\mathcal{K}} \pfrac{\mathcal{B}_{\bar{\mathcal{I}}\bar{\mathcal{K}}\bar{\mathcal{L}}}}{\mathcal{D}_{\bar{\mathcal{M}}}}
\mathcal{C}_{\bar{\mathcal{J}}\bar{\mathcal{K}}\bar{\mathcal{L}}}
+ \mathcal{B}_{\bar{\mathcal{I}}\bar{\mathcal{K}}\bar{\mathcal{L}}}
\pfrac{\mathcal{C}_{\bar{\mathcal{J}}\bar{\mathcal{K}}\bar{\mathcal{L}}}}{\mathcal{D}_{\bar{\mathcal{M}}}}
\quad \forall \bar{\mathcal{I}}\bar{\mathcal{J}}\bar{\mathcal{L}}\bar{\mathcal{M}}
\end{equation}
\end{proof}

\begin{corollary}
If only one tensor ($\mathcal{B}$, without loss of generality) is a function of $\mathcal{D}$, then the derivative from Property~\ref{prop:simpleMatrixDeriv} can be written as
\begin{equation}
    \pfrac{\mathcal{A}_{\mathcal{IJL}}}{\mathcal{D}_\mathcal{M}} = \pfrac{\mathcal{B}_\mathcal{IJL}}{\mathcal{D}_\mathcal{M}} \mathcal{C}_{\mathcal{JKL}}
\end{equation}
\label{c:singleDeriv}
\end{corollary}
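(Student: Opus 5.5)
The plan is to obtain Corollary~\ref{c:singleDeriv} directly from Property~\ref{prop:simpleMatrixDeriv} by showing that the second term of the product-rule expansion is identically zero. Since Property~\ref{prop:simpleMatrixDeriv} is already established at the level of individual scalar entries, I will work entrywise and only at the end return to Einstein notation.

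First, fix arbitrary index values $\bar{\mathcal{I}},\bar{\mathcal{J}},\bar{\mathcal{L}},\bar{\mathcal{M}}$. Write the scalar identity from the proof of Property~\ref{prop:simpleMatrixDeriv}: the sum over $\bar{\mathcal{K}}$ of
\[
\pfrac{\mathcal{B}_{\bar{\mathcal{I}}\bar{\mathcal{K}}\bar{\mathcal{L}}}}{\mathcal{D}_{\bar{\mathcal{M}}}}\mathcal{C}_{\bar{\mathcal{J}}\bar{\mathcal{K}}\bar{\mathcal{L}}}
+\mathcal{B}_{\bar{\mathcal{I}}\bar{\mathcal{K}}\bar{\mathcal{L}}}\pfrac{\mathcal{C}_{\bar{\mathcal{J}}\bar{\mathcal{K}}\bar{\mathcal{L}}}}{\mathcal{D}_{\bar{\mathcal{M}}}}.
\]

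Second, apply the hypothesis. If $\mathcal{C}$ is not a function of $\mathcal{D}$, then every scalar entry $\mathcal{C}_{\bar{\mathcal{J}}\bar{\mathcal{K}}\bar{\mathcal{L}}}$ is constant with respect to every scalar $\mathcal{D}_{\bar{\mathcal{M}}}$. Hence each partial $\partial\mathcal{C}_{\bar{\mathcal{J}}\bar{\mathcal{K}}\bar{\mathcal{L}}}/\partial\mathcal{D}_{\bar{\mathcal{M}}}$ vanishes, and so the whole second summand is zero for every $\bar{\mathcal{K}}$. Only the sum over $\bar{\mathcal{K}}$ of $\bigl(\partial\mathcal{B}_{\bar{\mathcal{I}}\bar{\mathcal{K}}\bar{\mathcal{L}}}/\partial\mathcal{D}_{\bar{\mathcal{M}}}\bigr)\mathcal{C}_{\bar{\mathcal{J}}\bar{\mathcal{K}}\bar{\mathcal{L}}}$ remains. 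Since this holds for all index values, re-encoding it in Einstein notation gives the stated formula.

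The third step handles the symmetric case. The ``without loss of generality'' is justified because, if instead $\mathcal{B}$ is the tensor independent of $\mathcal{D}$, the same argument kills the first term. One can also see this by relabeling $\mathcal{B}\leftrightarrow\mathcal{C}$ and $\mathcal{I}\leftrightarrow\mathcal{J}$, since the product of scalars commutes.

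The only real obstacle is notational rather than mathematical. The statement, like Property~\ref{prop:simpleMatrixDeriv}, writes the derivative factor with indices $\mathcal{IJL}$ where the entrywise computation shows the correct indices are $\mathcal{IKL}$ together with $\mathcal{M}$. I would therefore state explicitly that the derivative tensor $\partial\mathcal{B}/\partial\mathcal{D}$ carries indices $\mathcal{I}\mathcal{K}\mathcal{L}\mathcal{M}$, and that $\mathcal{K}$ is contracted against $\mathcal{C}$ while $\mathcal{L}$ stays element-wise. Read this way, the displayed formula is exactly the Einstein-notation form of the surviving entrywise sum.
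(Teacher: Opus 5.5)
Your proposal is correct and follows the same route as the paper, which simply calls the corollary a straightforward extension of Property~\ref{prop:simpleMatrixDeriv}: the $\partial\mathcal{C}/\partial\mathcal{D}$ term vanishes, leaving only the first summand. Your observation that the derivative factor should carry indices $\mathcal{IKL}$ (together with $\mathcal{M}$) rather than $\mathcal{IJL}$ is also right, and it agrees with the entrywise expression in the paper's proof of the property.
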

\begin{proof}
    This corollary is a straight-forward extension of Property~\ref{prop:simpleMatrixDeriv}.
\end{proof}

\subsection{Tensors and Linear Subspaces of matrices}
\label{ss:projection}
To appropriately work with Lie Algebras (defined in the next section), we discuss two operations that can be used when working with linear subspaces of matrices: synthesis and projection.  These subspaces are the set of matrices that are expressed as a linear combination of a set of basis matrices.  Examples of linear matrix subspaces include skew symmetric and symmetric matrices, with the basis for $3\times 3$ and $2\times 2$ matrices, respectively, as:
\begin{center}
\begin{tabular}{ccc}
Basis matrix & Skew Symmetric & Symmetric\\
    $\bm{B}_1$ & $\begin{bmatrix}
        0 & 1 & 0\\
        -1 & 0 & 0 \\
        0 & 0 & 0
    \end{bmatrix}$ & 
    $\begin{bmatrix}
        1 & 0 \\ 0 & 0
    \end{bmatrix}$ \\[1.5em]
    $\bm{B}_2$ & $\begin{bmatrix}
        0 & 0 & 1\\
        0 & 0 & 0 \\
        -1 & 0 & 0
    \end{bmatrix}$ & 
    $\begin{bmatrix}
        0 & 0 \\ 0 & 1
    \end{bmatrix}$ \\[1.5em]
    $\bm{B}_3$ & $\begin{bmatrix}
        0 & 0 & 0\\
        0 & 0 & 1 \\
        0 & -1 & 0
    \end{bmatrix}$ & 
    $\begin{bmatrix}
        0 & 1 \\ 1 & 0
    \end{bmatrix}$     
\end{tabular}
\end{center}

\textbf{Synthesis:}
Having defined the basis set, any matrix $\V$ in the linear subspace of matrices can be expressed as a vector ($\vvec$), where the elements of $\vvec$ denote the coefficients used to form the matrix in the subspace as follows:
\begin{equation}
    \V= \sum_i \vvec_i\bm{B}_{i}
\end{equation}

Using tensor notation, we can denote the basis set of matrices as $\mathcal{B}_{ijk}$, where the first index is an index into the basis set of matrices, and the 2nd and 3rd indices are the row and column indices of the matrices, respectively. Given a vector $\vvec$, we can obtain the matrix $\V$ using the contraction operation: $\V_{jk} = \vvec_i\mathcal{B}_{ijk}$.

\textbf{Projection:} Given a generic matrix $\bm{M}$, we would like to find the vector $\vvec^*$ that yields the closest (in a least squares sense) matrix in the spanning set of $\mathcal{B}$.  More formally:
\begin{equation}
\vvec^* = \argmin_{\vvec} \sum_j \sum_k (\bm{M}_{jk} - \vvec_i\mathcal{B}_{ijk})^2
\end{equation}
If $\mathcal{B}$ forms an orthonormal basis, i.e.
\begin{equation}
    \mathcal{B}_{ikl}\mathcal{B}_{jkl} = \begin{cases}
        1 & \text{if }i=j\\
        0 & \text{if }i\ne j
    \end{cases}
\end{equation}
then $\vvec_i^* = \mathcal{B}_{ijk}\bm{M}_{jk}$.  If $\mathcal{B}$ is not orthonormal, then a projection matrix $\bm{P}$ can be defined as:
\begin{equation}
\bm{P}_{ij} \triangleq \mathcal{B}_{ikl}\mathcal{B}_{jkl}
\label{eq:projMatrix}
\end{equation}
The projection of a matrix $\bm{M}$ on the basis set of matrices can then be found using the inverse of the projection matrix as:
\begin{equation}
    \vvec^*_i = (\bm{P}^{-1})_{ij}\mathcal{B}_{jkl}\bm{M}_{kl}
    \label{eq:matrixProjection}
\end{equation}
Note that this projection operation is equivalent to a Moore-Penrose pseudo-inverse if the basis matrices and input matrix were all vectorized.

\section{Matrix Lie Groups}
\label{s:mlgroups}
In this section, we briefly review matrix Lie groups and the key operations involved in using them in a gradient-based optimization routine.  We do not introduce tensor notation in this section, just the basic operations required to work with a matrix Lie group.  To properly describe a \emph{matrix Lie group}, the following subsections address each word in detail (in reverse order).

\subsection{\emph{Group}}
A \emph{group} ($\mathbb{G}$) is a set of objects with an operator ($\circ$) that when used on two elements of the group yields another element of the group (\emph{closure}).  More formally:
\begin{equation}
    \V_3 = \V_1 \circ \V_2\ \Rightarrow\ \V_3 \in \mathbb{G}\quad \forall \V_1, \V_2 \in \mathbb{G}
\end{equation}
Furthermore, a group must have an identity element ($\bm{I}$) and an inverse for every element ($\Vi$) such that:
\begin{equation}
\begin{aligned}
    \V \circ \bm{I} &= \V \quad \forall \V\\
    \V \circ \Vi &= \bm{I} \quad \forall \V
\end{aligned}
\end{equation}

\subsection{\emph{Lie} Group}
A Lie (pronounced L\={e} or ``lee'') group is a group that is also a differential manifold, implying that local movements on the manifold can be represented as a Euclidean space and that derivatives of that movement always exist.  Intuitively, a manifold can be thought of as a lower dimensional object (dimension $m$) placed in a higher dimensional space (dimension $n$, $m \le n$).  For example, lines are one-dimensional objects in higher dimensional spaces, defined by a specific vector.  Multiple vectors define a hyper-plane, defining a linear manifold where $m$ is the number of vectors spanning the hyper-plane, and the size of each vector is $n$.  More complex examples of a manifold include a hollow sphere, which is a 2D object ($m=2$) in a 3D space ($n=3)$\footnote{Note that while a 2D sphere in 3-space \emph{is} a manifold, it is \emph{not} a Lie group as there is no operator to combine elements of a sphere to form a new element of the sphere -- i.e., it is a manifold but not a group.}; rotation quaternions, which are a 3D object in a 4D space; and rotation matrices (SO3), which are a 3D object (the axes of rotation) in a 9D space (the 9 elements of the $3\times 3$ matrix.)  Each of these examples are also smooth, or differential manifolds as they are locally equivalent to a constant-dimensional hyper-plane at all points.  The set of integers, on the other hand, are a group but not a manifold as there is no \emph{differential} way to move from one integer to the next.

Associated with any Lie group is a Lie algebra, which is the tangent (hyper-)plane associated with the origin of the Lie group\footnote{Following previous Lie group papers, we denote elements of the Lie algebra using Fraktur letters (i.e., $\dlie$).}  To understand the importance of the Lie algebra, consider the case where you have an element of the Lie Group ($\V$) and want to characterize (continuous) movements about that element within the Group.  The simplest movement is no movement and can be expressed as $\V \circ \bm{I}$, corresponding to an all-zero Lie algebra vector.  To represent infinitesimal movements, a Taylor series expansion combining the identity and the Lie algebra together -- $\D = \bm{I}+\dlie$ -- is used, leading to infinitesimal movements $\W \circ \V$.  For non-infinitestimal movements, a differential equation is solved where the movement at each point is an element of the Lie algebra, leading to an \emph{exponential} operator that maps elements of the Lie algebra to elements of the Lie group.  Therefore, any continuous movement from $\V$ within the group can be represented as
\begin{equation}
    \V \circ \exp(\dlie)
\end{equation}
The $\log$ operator maps the opposite direction, from continuous movements in the Lie group to elements of the Lie algebra.

Because a Lie manifold is an $m$-dimensional object in an $n$-dimensional space with $m\le n$, a tangent plane to the manifold will be an $m-$dimensional hyperplane spanned by a set of $m$ basis vectors.  Therefore, any element $\vlie$ of the Lie algebra can also be expressed as an $m$ dimensional vector $\vvec$, representing a set of scalar multiples to the basis vectors of the Lie algebra. The mapping from a vector ($\vvec$) to a member of the Lie algebra is denoted as $\vlie = \vvec^\wedge$, and in the opposite direction as $\vvec = \vlie^\vee$.

By combining the exponentiation operator with the mapping of an $m$-dimensional vector to an $n$-dimensional tangent vector, a complete operation from a $m$-dimensional vector to an element of the group, $\V$, is defined.  We label this combined operation $\Exp()$ (with a capital letter) to distinguish it from a traditional exponential operator.  The inverse of this operation is labeled $\Log()$.  See Fig.~\ref{fig:lie_exp_log_relationship}(a) for a graphical representation of these operations.  Note that the \emph{hat} operator ($\vvec^\wedge$) that maps between an $m$-dimensional vector and $n$-dimensional Lie algebra is a synthesis operator that takes a lower dimensional vector and, using a basis set, creates a higher dimension ($n$) vector.  The $\emph{vee}$ operator ($\vvec^\vee$), on the other hand, can be represented as a projection operation, though the mapping should be exact if $\vlie$ is in the Lie algebra.  \emph{The ability to map a low dimensional vector to continuous movements on the manifold is what makes Lie Groups a powerful mathematical tool.}

\subsubsection{Lie Group Example}

An example of the complete mapping from vector to local Lie group movements is shown in \ref{fig:lie_exp_log_relationship}(b).  The Lie group in this case is the unit circle in the complex plane, with the complex multiply as the group operator.  Note that multiplying two elements on the unit circle together yields another element on the unit circle (closure), an identity element exists ($1+0i$) and the inverse of any element is defined -- $(a+bi)^{-1} = a-bi$.

At the identity element of the unit circle ($1+0i$), the tangent plane is a vertical line, or the set of vectors $0+\theta i,\ \theta\in\mathbb{R}$.  This is the Lie algebra for this group.  Because the plane is characterized by a single value, $\theta$, the hat operator is $\theta^\wedge = 0 + \theta i$, and the vee operator maps $0 + \theta i \to \theta$.  The exponential operator turns a member of the Lie algebra into a group member using Euler's identity: $e^{0+\theta i} = \cos\theta + i\sin\theta$, and $\Exp$ takes a mapping from $\theta$ all the way to a member of the Lie group ($\Exp(\theta) = \cos\theta+i\sin\theta$).

With these operations defined, any local movement around an element of the group ($\V$) can be defined by a single scalar.  Shown in in Fig.~\ref{fig:lie_exp_log_relationship}(b) is a movement of $d =0.8$ from $\V$.  Note that 0.8 maps to an element of the Lie algebra ($0 + 0.8i$) which is then exponentiated to a point in the Lie Group ($\bm{D}$).  By combining $\bm{V}$ and $\bm{D}$ together, a local movement from $\bm{V}$ is found.  Therefore, the value $d$ provides a mapping from a scalar value to a movement on the unit circle in the complex plane.  While this is a relatively simple example, it demonstrates the power of Lie group theory to represent local movements about a Lie group member with low-dimensional vectors (in this case, a scalar.) 

\begin{figure*}
    \centering
\begin{tabular}{cc}
\resizebox{.44\textwidth}{!}{
\begin{tikzpicture}[
    node distance=3.5cm, 
    main_node/.style={circle, draw, minimum size=1.2cm, font=\large},
    arrow_label/.style={font=\small, midway},
    column_label/.style={font=\Large\bfseries, text=gray!60}
]

    \node[column_label] at (0, 3) {$\mathbb{R}^m$};
    \node[column_label] at (3.5,3) {$\mathfrak{g}$};
    \node[column_label] at (7, 3) {$\mathbb{G}$};

    \node (v) at (0,0) {$\vvec$};
    \node (vlie) at (3.5,0) {$\vlie$};
    \node (V) at (7,0) {$\V$};

    
    \draw[->, thick, >=Stealth] (v.north east) to [out=45, in=135] 
        node[arrow_label, above] {$\vvec^\wedge$} (vlie.north west);
    \draw[->, thick, >=Stealth] (vlie.north east) to [out=45, in=135] 
        node[arrow_label, above] {$\exp(\vlie)$} (V.north west);

    \draw[->, thick, >=Stealth] (V.south west) to [out=225, in=315] 
        node[arrow_label, below] {$\log(\V)$} (vlie.south east);
    \draw[->, thick, >=Stealth] (vlie.south west) to [out=225, in=315] 
        node[arrow_label, below] {$\vlie^\vee$} (v.south east);

    
    \draw[->, ultra thick, green!60!black, >=Stealth] (v.north) .. controls +(1, 2.0) and +(-1, 2.0) .. 
        node[midway, above=0.1cm, font=\bfseries] {$\text{Exp}(\vvec)$} (V.north);

    \draw[->, ultra thick, red!70!black, >=Stealth] (V.south) .. controls +(-1, -2.0) and +(1, -2.0) .. 
        node[midway, below=0.2cm, font=\bfseries] {$\text{Log}(\V)$} (v.south);

\end{tikzpicture}} & 
\resizebox{.52\textwidth}{!}{
\begin{tikzpicture}[scale=2.5, >=Stealth]

    \draw[thin, gray!20] (-1.5,-1.5) grid (2.5,1.5);
    \draw[->, thick] (-1.2,0) -- (2.2,0) node[right] {$\text{Re}(z)$};
    \draw[->, thick] (0,-1.2) -- (0,1.2) node[above] {$\text{Im}(z)$};

    \draw[very thick, blue!70!black] (0,0) circle (1);

    \filldraw[black] (1,0) circle (0.03);
    \node[below right] at (1,0) {$\mathbf{e} = 1 + 0i$};

    \draw[dashed, red!70!black] (1,-1.2) -- (1,1.5);
    \node[red!80!black, right] at (1, 1.3) {$\mathfrak{g} \cong 0 + \theta i$ (Lie Algebra)};

    \def\wVal{0.8}
    \def\wDeg{\wVal*180/pi)}
    \coordinate (w_origin) at (1,0);
    \coordinate (w_end) at (1,\wVal);
    \draw[->, line width=1.5pt, green!60!black, align=left] (w_origin) -- (w_end) node[midway, right] {$d = {\wVal}$\\[3pt] $\mathfrak{d} = d^\wedge = 0 + {\wVal} i$};
    
    \coordinate (exp_pt) at ({cos(\wDeg)}, {sin(\wDeg)});
    \draw[->, thick, orange] (w_end) to[out=170, in=30] (exp_pt);
    \node[orange, above] at ($(w_end)!0.5!(exp_pt) + (0,0.05)$) {$\exp(\mathfrak{d})$};
    \filldraw[black] (exp_pt) circle (0.03);
    \node[below left] at (exp_pt) {$\bm{D}$};

    \def\angleG{130}
    \coordinate (V) at ({\angleG}:1);
    \filldraw[black] (V) circle (0.03);
    \node[above left] at (V) {$\V$};

    \def\angleFinal{\angleG + \wDeg}
    \coordinate (comp_coord) at ({\angleFinal}:1);

    \draw[->, line width=1.5pt, green!60!black] (\angleG:1) arc (\angleG:\angleFinal:1);
    \filldraw[black] (comp_coord) circle (0.03);
    \node[left] at (comp_coord) {$\V\circ\bm{D}$};
    
    \node[draw, fill=white, font=\small, align=left] at (1.5, -1) {
        \textbf{Key Components:}\\
        1. \textcolor{blue}{Manifold}: $\mathbb{G}: z \in \mathbb{C}, |z|=1$\\
        2.  Origin : $\bm{e}$\\
        3. \textcolor{red!70!black}{Lie Algebra}: $0 + i\theta$ (tangent at $\bm{e}$)\\
        4. \textcolor{orange}{Exp Map}: $\theta \mapsto e^{i\theta}$\\
        5. \textcolor{green!60!black}{Local movement at $\V$}: $\V \circ \Exp(d)$
    };

\end{tikzpicture}
} \\
(a) Lie group operations summary & (b) Operations applied to U(1)
\end{tabular}
\caption{An illustration of the $\Exp$ and $\Log$ operators and their relationship to other operators in Lie groups (a), with a specific example of applying these operations on the unit circle in the complex plane, a Lie group (b).}
    \label{fig:lie_exp_log_relationship}
\end{figure*}

\subsection{\emph{Matrix} Lie Groups}
\label{ss:mlgroups}
Matrix Lie groups are Lie groups that consist of matrices.  The broadest matrix Lie group is the general linear group of a specific size ($\mathrm{GL}(n)$), which consists of all $n\times n$ invertible matrices.  Subsets of this group that are of interest include the special linear group $\mathrm{SL}(n)$ which includes all matrices of determinant of +1 (used to optimize homography matrices for computer vision\cite{begelfor2005put}), the special orthogonal group ($\mathrm{SO}(n)$), with $\mathrm{SO}(3)$ being of particular interest as it represents physically realizable rotations in the 3D world, and the Special Euclidean group ($\mathrm{SE}(n)$), which represents Euclidean movement in the $n$ dimensional space.  $\mathrm{SE}(3)$ is represented by a $4\times 4$ matrix and are the homogeneous matrices used in computer vision and graphics to represent rotation and translation in the 3D world.

Matrix Lie groups are nicely defined by well-known matrix operations.  The group operator ($\circ$) is matrix multiplication, the identity element is the identity matrix, and the inverse is the matrix inverse.  The Lie algebra elements ($\vlie$) are not truly vectors but rather matrices.  The exponential operation on matrices (and its inverse, the logarithm) is well-studied and is available in many computational libraries.  This operation is defined by:
\begin{equation}
    \exp(\vlie) = \bm{I} + \vlie + \frac{\vlie^2}{2!} + \frac{\vlie^3}{3!} + \cdots = \sum_{k=0}^{\infty} \frac{\vlie^k}{k!}.
    \label{eq:exponentiation}
\end{equation}
Note that for small $\vlie$, this exponentiation operator can be approximated by $\bm{I} + \vlie$.  

\section{Tensor Expressions and Matrix Lie Groups}
\label{s:TandMLGs}
In the previous two sections, we have introduced both tensors with some fundamental tensor operations and matrix Lie groups.  In this section, we show how using tensor notation creates a perspicuous representation of matrix Lie group operations that enables several of the fundamental operations needed to work with matrix Lie groups.  In the sections that follow, we introduce formal tensor notation for the \emph{hat} and \emph{vee} operations that map from a Euclidean space to the Lie algebra space (and vice versa).  We also introduce notation for handling derivatives with matrix Lie groups, discussing two types of derivatives that are needed for estimation problems.

\subsection{Mapping vectors to a matrix Lie algebra}
As shown in Fig.~\ref{fig:lie_exp_log_relationship}, mapping from a vector to a Lie group consists of two steps: (1) mapping from the vector to the Lie algebra and (2) using the $\exp$  function to map to a matrix in the group.  The first step, commonly referred to as a \emph{hat} operator, is generally defined as an algorithmic procedure rather than a mathematical procedure in previous research literature.  In this subsection, we show how both the $hat$ and $vee$ operators can be defined as linear operators using tensor notation as follows:
\begin{equation}
    \begin{aligned}
        \vlie = \vvec^\wedge &\triangleq \vvec_i\mathcal{B}_{ijk} \\
        \vvec = \vlie^\vee &\triangleq (\bm{P}^{-1})_{ij} \mathcal{B}_{jkl} \vlie_{kl} 
    \end{aligned}
\end{equation}
where $\mathcal{B}$ is a basis set of matrices defining the Lie algebra.  The three indices to $\mathcal{B}_{ijk}$ are $i$, which indexes the matrices while $j$ and $k$ index the rows and columns of the matrix, respectively.  In this case the mapping of a vector $\vvec$ to Lie algebra entry $\vlie$ can be expressed as:
\begin{equation}
    \vlie_{jk} = \vvec_i \mathcal{B}_{ijk}
    \label{eq:defineHat}
\end{equation}

To define the $vee$ operator, we use the projection operation introduced in Section~\ref{ss:projection}, specifically equations \eqref{eq:projMatrix} and \eqref{eq:matrixProjection}, where the matrix $\bm{M}$ is replaced with the Lie algebra matrix $\vlie$.  

Note that rather than algorithmically defining the $\emph{hat}$ and $\emph{vee}$ operations, tensor notation allows us to define these operations as linear subspace synthesis and projection operations.

\subsection{Derivatives for estimation}
In addition to mathematically defining the hat and vee operators, tensors can also be used to clearly define the derivatives required for estimation when working with a matrix Lie group.  Before introducing tensor notation for derivatives, consider the traditional estimation problem where residuals ($\bm{y}$) between the measurement received ($\bm{z}$) and a predicted measurement ($\hat{\bm{z}}\triangleq h(\hat{\bm{x}})$) should be minimized, where $\bm{h}$ is a non-linear function of the state to be estimated ($\bm{x}$).  To perform gradient-based estimation, this equation is typically linearized about the current estimate $\hat{\bm{x}}$, yielding:
\begin{equation}
    \bm{y} \approx \bm{z} -(\bm{h}(\hat{\bm{x}}) + \bm{H}\Delta\bm{x})
    \label{eq:LinearizedBasicEst}
\end{equation}
where $\bm{H}\triangleq\left.\pfrac{\bm{h}(\bm{x})}{\bm{x}}\right|_{\bm{x}=\hat{\bm{x}}}$ and $\Delta\bm{x}\triangleq \bm{x}-\hat{\bm{x}}$.  To perform estimation, this $\bm{H}$ matrix is used to compute the Kalman gain, to create a least squares optimization problem for factor graph optimization, or otherwise perform a gradient-based optimization that minimizes the residual values.

In this paper, we will consider two different classes of problems that use manifolds in estimation problems, describing how the $\bm{H}$ needed for optimization can be computed for each class of problems.  These two classes are: 
\begin{enumerate}
\item \textbf{The state is an element of a matrix Lie group.}  In this case, the $\bm{x}$ that is an input to $\bm{h}(\bm{x})$ is an element of a matrix Lie group.  \emph{But}, the $\Delta \bm{x}$ from \eqref{eq:LinearizedBasicEst} is a vector representing elements of the Lie algebra of the group.
\item \textbf{The measurement is an element of a matrix Lie group.}  In this case, $\bm{h}(\bm{x})$ is a function that generates an element of a matrix Lie group from the current state ($\V$), while the measurement itself is another matrix Lie group element $\W$.  In this case, the traditional, Cartesian subtraction of the two elements will not yield an acceptable residual. Instead, the difference ($\D$) between the two elements is defined such that $\W = \V\D$, and the residual to be minimized is\footnote{Assuming right-multiplication of the group members.  For left-multiply, $\W \triangleq \D\V$, implying $\D = \W\Vi$.}:
\begin{equation}
    \bm{y} = \Log(\D) = \Log(\Vi\W).
    \label{eq:LogManifoldMeas}
\end{equation}
\end{enumerate}
We discuss these two different classes of problems in the subsections that follow.

\subsubsection{States that are elements of a matrix Lie group}
\label{sss:stateManifold}
Consider the case where a state element ($\V$) is an element of a matrix Lie group.  Assume, without loss of generality, we can write the predicted measurement function as:
\begin{equation}
    \bm{h}(\bm{x}) = \bm{g}(\bm{U}\V\bm{w})
    \label{eq:matrixStateMeas}
\end{equation}
where $\V$ is an element of a matrix Lie group and $\bm{U}$ and $\bm{w}$ are an appropriately sized matrix and vector, respectively.  Furthermore, assume that $\V$ is a function of some vector $\vvec$.  (Often, when using matrix Lie groups, $\bm{x}$ will be a member of the group (i.e., $\V$), while $\Delta\bm{x}$ will be a vector that maps to the Lie algebra (i.e. $\vvec$).  Therefore, $\pfrac{\bm{h}(\bm{x})}{\Delta\bm{x}} = \pfrac{\bm{h}}{\vvec}$.  To compute $\pfrac{\bm{h}}{\vvec}$, we first use the chain rule to combine the derivative of $\bm{g}$ with the derivative of its inputs:
\begin{equation}
    \pfrac{\bm{h}_i}{\vvec_k} = \left. \pfrac{\bm{g}_i(\bm{y})}{\bm{y}_j} \right|_{y=\bm{U}\V\bm{w}} \pfrac{(\bm{U}\V\bm{w})_j}{\vvec_k}
\end{equation}

To compute $\pfrac{(\bm{U}\V\bm{w})_j}{\vvec_k}$, two approaches can be followed.  One approach is to take an element by element derivative of $\V$ w.r.t. $\vvec$.  Note that this derivative is a 3D tensor expressed as $\pfrac{\V_{ij}}{\vvec_k}$ where $i,$ $j,$ and $k$ are tensor indices and indicate that for each element of $\vvec$, an entire matrix of partial derivatives exists\footnote{To be more consistent, we could write this tensor as $\left(\pfrac{\V}{\vvec}\right)_{ijk}$, but placing the indices in the numerator or denominator of the partial derivative makes role of each index in the tensor more explicit.}.  Using Corollary~\ref{c:singleDeriv}, this leads to an expression for $\pfrac{(\bm{U}\V\bm{w})_j}{\vvec_k}$ in Einstein notation as follows:
\begin{equation}
\pfrac{(\bm{U}\V\bm{w})_j}{\vvec_k} = \bm{U}_{jl} \pfrac{\V_{l m}}{\vvec_k} \bm{w}_m \label{eq:derivUVw}
\end{equation}
where $j,k,l$, and $m$ are tensor indices.  Note how the tensor notation clarifies the inner products that must take place for $\pfrac{\bm{h}_i}{\vvec_k}$ to be computed, overcoming the difficulty of using matrix derivatives in more complex equations.

The second approach is to use properties of the Lie group to represent local movements about $\V$.  This can be represented by replacing $\V$ with a combination of $\V$ and a ``difference'' matrix $\D$, where $\D$ represents a local movement on the manifold\footnote{Note that the difference matrix is on the \emph{right} side of $\V$.  This is an arbitrary choice and can be done on the left-side as well.  For brevity, this paper focuses on right-side multiplication.}: i.e. $\bm{U}\V\bm{w} = \bm{U}\V\bm{Dw}$. This local movement is a function of a vector $\bm{d}$ such that $\D \triangleq \Exp(\bm{d})$.  Note than when $\bm{d}$ is all zeros, $\D$ is the identity matrix and this quantity is exactly the same as $\bm{UVw}$.  Using this formulation, we replace \eqref{eq:derivUVw} with:

\begin{equation}
    \pfrac{(\bm{UVDw})_j}{\bm{d}_k} = \bm{U}_{jl}\V_{lm}\pfrac{\D_{mn}}{\bm{d}_k}\bm{w}_n
    \label{eq:derivUVDw}
\end{equation}

For matrix Lie groups, the derivative tensor $\pfrac{\D_{mn}}{\bm{d}_k}$ is surprisingly simple:
\begin{equation}
    \begin{aligned}
        \bm{D} &= \Exp(\bm{d}) && \text{by definition of }\bm{d}\text{ and }\bm{D}\\
        \bm{D} &= \exp(\mathcal{B}_{ijk}\bm{d}_i) && \text{by definition of }\Exp\text{ and }\wedge\\
        \bm{D} &\approx \bm{I} + \mathcal{B}_{ijk}\bm{d}_i && \text{for small $\bm{d}$ from \eqref{eq:exponentiation}}\\
        \pfrac{\bm{D}_{jk}}{\bm{d}_i} &= \mathcal{B}_{ijk} && \text{differentiation}
    \end{aligned}
\end{equation}
enabling Equation~\eqref{eq:derivUVDw} to be written as:
\begin{equation}
    \pfrac{(\bm{U}\V\D\bm{w})_j}{\bm{d}_k} = \bm{U}_{jl}\bm{V}_{lm}\mathcal{B}_{kmn}\bm{w}_n
\end{equation}
where $\mathcal{B}_{kmn}$ is the basis tensor for the Lie algebra

When the estimator finds a $\bm{d}$ value that it believes will minimize error in the state, it can be applied to the current state estimate using the group operator as follows:
\begin{equation}
    \V^+ = \V^-\circ \Exp(\bm{d})
\end{equation}
where $\V^-$ and $\V^+$ are the elements of the matrix Lie group before and after the optimization step, respectively. 

\subsubsection{Measurements in the matrix Lie group}
\label{sss:measManifold}
When using measurements in the matrix Lie group, one of the most difficult parts is determining the derivative of the $\Log$ function in Equation~\eqref{eq:LogManifoldMeas}. Note that the $\Log$ function has a matrix as an input, but outputs a vector the size of the tangent plane dimensionality (i.e., it is a non-injective function).  If the matrix is an element of the matrix Lie group that is differentially reachable from the identity element, the $\Log$ function is precisely defined as the inverse of $\Exp$.  However, when considering the input to be any matrix of the correct size, $\Log$ is more difficult to define.  (A point that has generally been ignored in previous papers on Lie groups.)  

With linear functions, non-injectivity can be overcome by performing a \emph{pseudo}-inverse. This operation performs a least-squares mapping to a point where the forward function is precisely defined and performs the inverse of that point.  For non-linear functions, the pseudo-inverse of first-order derivatives yields a least-squares mapping to a tangent plane about the differentiation point, and returns that point on the tangent plane.  With tensor notation, we can define a similar least-squares mapping of matrices to the linear subspace defined by the forward derivative.

Assume that (1) an initial value of $\D$ and $\bm{d}$ is known such that $\D = Exp(\bm{d})$ and (2) the derivative of $\pfrac{\D}{\bm{d}}$ is known for the current value of $\bm{d}$.  Because $\pfrac{\D}{\bm{d}}$ represents the derivative of $\Exp(\bm{d})$ w.r.t. $\bm{d}$, the pseudo-inverse yields a valid derivative of $\Log$, with an implied least squares mapping from any matrix to the tangent plane about $\D$.  Utilizing the projection operation defined in Section~\ref{ss:projection}, we compute the pseudo inverse of $\pfrac{\D}{\bm{d}}$ as:
\begin{equation}
    \begin{aligned}
    \pfrac{\Log(\D)_i}{\D_{kl}} \triangleq (\bm{P}^{-1})_{ij} \pfrac{\D_{kl}}{\bm{d}_j} && \text{where}\\
    \bm{P}_{mn} \triangleq \pfrac{\D_{op}}{\bm{d}_m}\pfrac{\D_{op}}{\bm{d}_n}
    \end{aligned}
    \label{eq:projectionTensor}
\end{equation}

Computing the derivative of $\Log$ using the pseudo-inverse has several advantages.  First, note that for arbitrary $\bm{d}$, the derivative of $\Log$ may be difficult to compute.  For example, the SO3 (rotation matrix) group has closed-form expressions for the derivative of $\Exp$, but there is no single, closed-form expression for the derivative of $\Log$ that works for all possible rotations.  Using the inverse of the derivative of $\Exp$ side-steps this issue.  Second, even when the derivative of $\Log$ does have a closed-form expression, it is generally just for matrices that are in the Lie group.  Using a projection operator broadens the derivative to accept any perturbation in the matrix in a mathematically sound way.  Note that this method of computing the derivative of $\Log(\D)$ is novel and not presented previously in the research literature.  

\section{Results/Demonstration}
\label{s:results}
To demonstrate the utility of tensor notation when optimizing over Matrix Lie groups, we present results of two optimization-based estimation scenarios.  Both scenarios implement a Gauss Newton optimization procedure as outlined in Algorithm~\ref{alg:GN}.  Note that both the measurements ($\mathbb{Y}$) and the states being estimated ($\mathbb{X}$) consist of an ordered set of elements.  In the first example, the measurements are Cartesian vectors and the state is a single SO3 matrix.  In the second example, the measurements are SO3 matrices, while the states are also a set of SO3 matrices.  The residual and the $\Delta \bm{x}$ are vectors however, enabling a Jacobian matrix to be computed (line 4) to find the next step in the optimization routine.  More details for both examples are described in the following sub-sections.

Note that the matrix Lie group used in both estimation problems is SO3.  This requires a library implementing the $\Exp$ and $\Log$ functions, together with a $\pfrac{\Exp(\vvec)}{\vvec}$ function.  Details of how these were implemented for SO3 are given in Appendix~\ref{a:so3}. Python code implementing both estimation problems can be found at \url{https://github.com/cntaylor/tensor_matrix_lie_group_example/}.  

\begin{algorithm}
\caption{Gauss-Newton Optimization}
\label{alg:GN}
\begin{algorithmic}[1]
\renewcommand{\algorithmicrequire}{\textbf{Input:}}
\renewcommand{\algorithmicensure}{\textbf{Output:}}
\REQUIRE Measurements $\mathbb{Y}$, Initial state estimate $\mathbb{X}_0$
\ENSURE Optimized state $\mathbb{X}^*$

\STATE $\mathbb{X} \leftarrow \mathbb{X}_0$

\WHILE{not converged}
    \STATE Compute residual vector: $\bm{r}(\mathbb{X},\mathbb{Y})$
    \STATE Compute Jacobian matrix: $\bm{L} = \pfrac{\bm{r}}{\Delta\bm{x}}$
    \STATE Solve normal equations for step $\Delta \mathbf{x}$:\\  $\Delta\bm{x} = -(\bm{L}^\top\bm{L})^{-1}\bm{L}^\top\bm{r}$
    \STATE Update state estimate: $\mathbb{X} \leftarrow f(\mathbb{X}, \Delta \bm{x})$
\ENDWHILE
\RETURN $\mathbb{X}^* = \mathbb{X}$
\end{algorithmic}
\end{algorithm}

\subsection{Estimation with state space matrix Lie groups }
In the first scenario, the goal is to estimate an SO3 matrix that best aligns two sets of associated 3D points.  Therefore, the state estimate ($\mathbb{X}$) is a single rotation (SO3) matrix and the residuals are a function of ten 3D vectors ($\bm{u}_1, \bm{u}_2,...,\bm{u}_{10}$) and their rotated versions ($\bm{v}_1,...,\bm{v}_{10}$).  The ten original vectors are randomly sampled from within a zero-centered cube with side lengths of 20, the true rotation matrix $\bm{R}$ is also randomly chosen, and the rotated points are computed as $\bm{v}_i=\bm{Ru}_i$.  Zero-mean, Gaussian noise with a standard deviation of 0.1 is then added to all $\bm{u}$ and $\bm{v}$ vectors, forming the set of measurements $\mathbb{Y}$.

To perform the Gauss Newton optimization as outlined in Algorithm~\ref{alg:GN}, the following quantities are defined:

\begin{align*}
\mathbb{X}_0 &= \{\bm{X}\} = {\bm{I}} && \text{\parbox{4cm}{Initial rotation estimate is the identity matrix}}\\
\bm{r}(\mathbb{X},\mathbb{Y}) &= \begin{bmatrix}\bm{Xu}_1 - \bm{v}_1\\
\vdots\\\bm{Xu}_{10} - \bm{v}_{10}\end{bmatrix} && \text{\parbox{4cm}{Note that $\bm{X}$ is a rotation (SO3) matrix}}\\
\bm{L} &= \begin{bmatrix}
    \bm{X}_{ij}\mathcal{B}_{\ell jk}(\bm{u}_1)_k\\
    \vdots\\
    \bm{X}_{ij}\mathcal{B}_{\ell jk}(\bm{u}_{10})_k\\
\end{bmatrix} && \text{\parbox{4cm}{$\bm{L}$ is a $30\times 3$ matrix. Each tensor equation yields a 2D ($3\times 3$) tensor with indices $i\ell$.  $i$ represents rows and $\ell$ the columns}}\\
f(\mathbb{X},\Delta\bm{x}) &= \bm{X}\Exp(\Delta \bm{x}) && \text{\parbox{4cm}{Update $\bm{X}$ using matrix multiplication}}
\end{align*}
 
Note how using the tensor notation yields explicit and compact equations for the derivative of the residual function w.r.t. the rotation matrix being estimated.  The rotation estimates obtained using this procedure are also highly accurate, with a representative result shown in Fig.~\ref{fig:stateEstResults}.  

\begin{figure}[hb]
\begin{center}\begin{tabular}{cc}
\textbf{Truth} & \textbf{Estimated Value}\\
$\begin{bmatrix}
0.911 & -0.346 & 0.223 \\
0.294 & 0.926 & 0.238 \\
-0.289 & -0.152 & 0.945
\end{bmatrix}$ & 
$\begin{bmatrix}
0.913 & -0.338 & 0.228 \\
0.284 & 0.928 & 0.240 \\
-0.293 & -0.154 & 0.944
\end{bmatrix}$
\end{tabular}
\end{center}
    \caption{Representative results from optimization using tensor derivatives on a matrix Lie group.}
    \label{fig:stateEstResults}
\end{figure}


\subsection{Estimation with measurements on a matrix Lie group}
Now let us consider a more complex example that (a) has measurements that are in a matrix Lie group and (b) is estimating multiple SO3 matrices simultaneously.  The motivation for this scenario is a satellite attitude estimation problem, where a star tracker gives rotation as a measurement, and gyroscopes give relative rotation between time steps.  A factor graph representation of this problem is shown in Fig.~\ref{fig:manifoldMeasGraph}, where each individual variable ($\bm{X}_i$) to be estimated is an SO3 matrix.  The individual measurements (the black boxes) at each state are SO3 matrices, and the green dynamics are also SO3 matrices.  Assuming the true rotations are written as $\bm{R}_i$, then the gyroscope measurement equations are:
\begin{equation}
\bm{g}_i = \Log(\bm{R}_i^\top\bm{R}_{i+1}) + \nu_g
\end{equation}
with $\nu_g \in \mathcal{N}(0,\sigma_g^2)$.  The absolute rotation measurement for each timestep is given by:
\begin{equation}
    \bm{M}_i = \Exp\left(\Log(\bm{R}_i)+\nu_m\right)
\end{equation}
with $\nu_m \in \mathcal{N}(0,\sigma_m^2)$.  In this simulation, $\sigma_m =$1E-1 and $\sigma_g=$1E-4. 

\begin{figure*}
    \centering
    \includegraphics[width=0.8\textwidth]{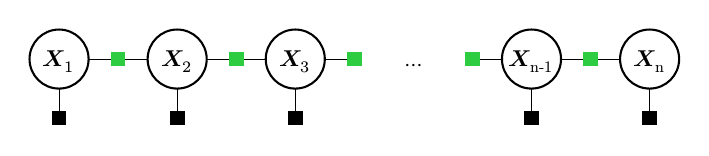}
    \caption{Factor graph used to demonstrate \emph{measurements} that are elements of a matrix Lie group.}
    \label{fig:manifoldMeasGraph}
\end{figure*}

To perform the Gauss-Newton optimization outlined in Algorithm~\ref{alg:GN}, the quantities defined in Fig.~\ref{fig:measOptVals} are used,
\begin{figure*}
\begin{equation*}
    \begin{aligned}
    \mathbb{X}_0 &= \{\bm{M}_1, \bm{M}_2, ..., \bm{M}_n\} &&\text{Initialize the states with the measurements}\\
    \bm{r}(\mathbb{X},\mathbb{Y}) &= \begin{bmatrix}
        \Log(\bm{M}_1^\top\bm{X}_1)\\
        \vdots\\
        \Log(\bm{M}_n^\top\bm{X}_n)\\
        \Log(\bm{X}_{1}^\top\bm{X}_{2}) - \bm{g}_{1}\\
        \vdots\\
        \Log(\bm{X}_{n-1}^\top\bm{X}_{n}) - \bm{g}_{n-1}
    \end{bmatrix}&& \text{\parbox{4.5cm}{Stacked residuals.  A $3(2n-1)$ element vector}}\\
    \bm{L} &= \begin{bmatrix}
        \bm{J}_1 & \bm{0} & \bm{0} & \cdots & \bm{0}\\
        \bm{0} & \bm{J}_2 & \bm{0} & \cdots & \bm{0}\\
        \vdots & \vdots & \vdots & \ddots & \vdots\\
        \bm{0} & \bm{0} & \bm{0} &  \cdots & \bm{J}_n\\
        \bm{G}_1 & \bm{H}_1 & \bm{0} & \cdots & \bm{0}\\
        \bm{0} & \bm{G}_2 & \bm{H}_2 & \cdots & \bm{0}\\
        \bm{0} & \bm{0} & \bm{0} & \cdots & \bm{H}_{n-1}
    \end{bmatrix}&&\text{a 3(2n-1)$\times$3n matrix}\\
    f(\mathbb{X},\Delta\bm{x}) &= \left[\bm{X}_i\Exp(\Delta\bm{x}_i)\right]_i&&\text{Update each rotation state individually}
\end{aligned}
\end{equation*}
\caption{Quantities needed to perform second optimization example}
\label{fig:measOptVals}
\end{figure*}
where $\bm{L}$ is a block, sparse matrix, $\bm{J}_i$ represents the derivative of $\Log(\bm{M}_i^\top\bm{X}_i)$ w.r.t. $\bm{X}_i$, and $\bm{G}_i$ and $\bm{H}_i$ represent the derivative of $\Log(\bm{X}_i^\top\bm{X}_{i+1})$ w.r.t. $\bm{X}_i$ and $\bm{X}_{i+1}$, respectively.  The derivation for these quantities using the tensor notation and projection operation are defined below.

To define $\bm{J}$, $\bm{G}$, and $\bm{H}$, we first define a function that takes in a 3-vector and generates a 3D tensor representing the $\Log$ derivatives using Equation~\eqref{eq:projectionTensor}:
\begin{equation}
    \begin{aligned}
        \mathcal{P}(\bm{d})_{ikl} &\leftarrow (\bm{P}^{-1})_{ij} \mathcal{D}_{jkl} & \text{where}\\
        \mathcal{D}_{ijk} &\triangleq \pfrac{\Exp(\bm{d})_{jk}}{\bm{d}_i} \\
        \bm{P}_{ij} &= \mathcal{D}_{ikl} \mathcal{D}_{jkl}
    \end{aligned}
\end{equation}
Using the chain rule, this function output is combined with the derivative of the items inside the $\Log$ function to compute the derivatives needed for Gauss-Newton optimization.  The three derivatives (one for the absolute measurements, two for the gyroscopes) needed are\footnote{In the 2nd equation, note that the transpose is performed using tensor indices.}:
\begin{align*}
    \pfrac{(\bm{M}^\top\bm{X})_{jk}}{\Delta \bm{x}_i} &= \bm{M}_{lj}\bm{X}_{lm}\mathcal{B}_{imk}\\
    \pfrac{(\bm{X}_\alpha^\top\bm{X}_{\alpha+1})_{jk}}{(\Delta\bm{x}_\alpha)_i} &= \mathcal{B}_{ilj}(\bm{X}_\alpha)_{ml}(\bm{X}_{\alpha+1})_{mk}\\
    \pfrac{(\bm{X}_\alpha^\top\bm{X}_{\alpha+1})_{jk}}{(\Delta\bm{x}_{\alpha+1})_i} &= (\bm{X}_\alpha)_{lj}(\bm{X}_{\alpha+1})_{lm}\mathcal{B}_{imk}
\end{align*}

These are combined to obtain the final formulas:
\begin{equation}
    \begin{aligned}
        \bm{J}_{i\ell} &= \mathcal{P}\left(\Log(\bm{M}^\top\bm{X})\right)_{ijk}\pfrac{(\bm{M}^\top\bm{X})_{jk}}{\Delta\bm{x}_\ell}\\
        \bm{G}_{i\ell} &= \mathcal{P}\left(\Log(\bm{X}_\alpha^\top\bm{X}_{\alpha+1})\right)_{ijk}\pfrac{(\bm{X}_\alpha^\top\bm{X}_{\alpha+1})_{jk}}{(\Delta\bm{x}_{\alpha})_\ell}\\
        \bm{H}_{i\ell} &= \mathcal{P}\left(\Log(\bm{X}_\alpha^\top\bm{X}_{\alpha+1})\right)_{ijk}\pfrac{(\bm{X}_\alpha^\top\bm{X}_{\alpha+1})_{jk}}{(\Delta\bm{x}_{\alpha+1})_\ell}\\
    \end{aligned}
\end{equation}

Results for performing optimization with this factor graph (where all measurements are on the matrix Lie group) are shown in Table~\ref{tab:measMLG_results}.  Because the standard deviation of the dynamic (between factors) is small compared to the measurement covariance, we would expect the overall accuracy to $\approx\frac{\sigma_m}{\sqrt{n}}$ where $n$ is the number of factors in the graph.  In Table~\ref{tab:measMLG_results}, we show the standard deviation across 1000 runs of the factor graph, where each run had a different (randomly selected) set of truth values, and corresponding measurements and dynamics.  Note that the achieved values closely match the theoretically predicted values. This demonstrates the capability of the tensor-based notation to properly handle the derivative of the $\Log$ functions using the technique described in Section~\ref{sss:measManifold}.

\begin{table}[]
    \centering
    \caption{Standard deviation of factor graph optimized values for differing number of nodes.  Note that the achieved performance closely matches the theoretically predicted performance.}
    \label{tab:measMLG_results}
    \begin{tabular}{|c|c|c|c|}
    \hline
         Number of states (n) & 5 & 10 & 20  \\\hline
         Achieved accuracy $\sigma$ & 4.49E-2 &  3.25E-2 & 2.26E-2\\\hline
         Theoretical accuracy $\sigma=\frac{1E-1}{\sqrt{n}}$ & 4.47E-2 & 3.16E-2 & 2.24E-2\\\hline
    \end{tabular}
\end{table}

\section{Conclusion}
\label{s:concl}
In this paper, we have introduced a novel method for representing the matrix Lie group derivatives required for estimation using tensors and Einstein summation notation.  We believe this representational technique is both more explicit and easier to understand than previous methods used to describe these derivatives.  In addition, using tensor notation, we can define a matrix projection operation that simplifies the computation and derivatives of the $\Log$ function required for matrix Lie group operations.  The efficacy of this technique has been demonstrated using optimization on rotation matrices.

\bibliographystyle{plain}
\bibliography{main}

\appendix
\section{SO3 library implementation}
\label{a:so3}
To implement optimization using a matrix Lie group, three functions are required: $\Exp$, $\Log$, and $\pfrac{\V}{\vvec}$.  How these functions were implemented for the special orthogonal group of $3\times 3$ matrices (SO3) is outlined in this appendix.  Note that the Lie algebra for SO3 matrices is the set of skew symmetric matrices, yielding the basis tensor:
\begin{equation}
\begin{aligned}
    \mathcal{B}_{1,jk} &= \begin{bmatrix} 0 & 0 & 0 \\ 0 & 0 & -1 \\ 0 & 1 & 0 \end{bmatrix} \\
    \mathcal{B}_{2,jk} &= \begin{bmatrix} 0 & 0 & 1 \\ 0 & 0 & 0 \\ -1 & 0 & 0 \end{bmatrix}  \\
    \mathcal{B}_{3,jk} &= \begin{bmatrix} 0 & -1 & 0 \\ 1 & 0 & 0 \\ 0 & 0 & 0 \end{bmatrix} &
\end{aligned}
\end{equation}  
The computations are here for completeness, but a more detailed explanation can be found in \cite{blanco2021tutorial}.

\subsection{$\Exp$}
Using a Taylor Series expansion of the exponential operator ($\mathrm{exp}$) and the properties of skew symmetric matrices, we obtain:
\begin{equation}
\begin{aligned}
    \bm{S} &= \mathcal{B}_{ijk}\vvec_i && \parbox{4cm}{\raggedright The ``hat'' operator yields a skew symmetric matrix}\\
    \theta &= ||\vvec|| && \text{Defined for compactness}\\
    \V &= \Exp(\vvec) && \text{The desired expression}\\
       &= \mathrm{exp}(\bm{S}) && \text{By definition of }\Exp\\
       &= \bm{I} + \frac{\sin\theta}{\theta}\bm{S} + \frac{1-\cos\theta}{\theta^2}\bm{S}^2 && \parbox{4cm}{By Taylor series expansion of $\exp$ and skew symmetric matrix properties}
\end{aligned}
\label{eq:deriveExpSO3}
\end{equation}
For completeness, this yields the expression in Equation~\eqref{eq:fullExpSO3}.
\begin{figure*}[h!t!]
\begin{equation}
    \Exp(\vvec) = \begin{bmatrix}
        1 - \frac{1-\cos\theta}{\theta^2}(v_2^2+v_3^2) & -\frac{\sin\theta}{\theta}v_3 + \frac{1-\cos\theta}{\theta^2}v_1v_2 & \frac{\sin\theta}{\theta}v_2 + \frac{1-\cos\theta}{\theta^2}v_1v_3 \\
        \frac{\sin\theta}{\theta}v_3 + \frac{1-\cos\theta}{\theta^2}v_1v_2 & 1 - \frac{1-\cos\theta}{\theta^2}(v_1^2 + v_3^2) & -\frac{\sin\theta}{\theta}v_1 + \frac{1-\cos\theta}{\theta^2}v_2v_3 \\
        -\frac{\sin\theta}{\theta}v_2 + \frac{1-\cos\theta}{\theta^2}v_1v_3 & \frac{\sin\theta}{\theta}v_1 + \frac{1-\cos\theta}{\theta^2}v_2v_3 & 1 - \frac{1-\cos\theta}{\theta^2}(v_1^2 + v_2^2)
    \end{bmatrix}
    \label{eq:fullExpSO3}
\end{equation}
\end{figure*}

\subsection{$\Log$}
To compute the $\Log$ of a SO3 matrix ($\vvec = \Log(\V)$), two properties of the matrix in \eqref{eq:fullExpSO3} should be noted.  First, note that the basis tensor can be used to operate on this matrix to yield a scaled version of $\vvec$.
\begin{equation}
    2\frac{\sin\theta}{\theta}\vvec = \mathcal{B}_{ijk}\V_{jk}
\end{equation}
Second, to find $\theta$, note that the trace of the matrix yields $3-2(1-\cos\theta)$ -- using the fact that $\frac{v_1^2 + v_2^2 + v_3^2}{\theta^2}=1$.  Therefore,
\begin{equation}
    \vvec = \frac{\theta}{2\sin\theta}\mathcal{B}_{ijk}\V_{jk}
\end{equation}
While this method works well for most values of $\theta$, as $\theta \rightarrow \pi$, the scaling term $\frac{\theta}{\sin \theta}$ approaches infinity. Therefore an alternative method for computing log is to compute a vector ($\bm{s}$) using just the diagonal values of $\V$.
\begin{equation}
\begin{aligned}
    \bm{s} &= \frac{2 diag(\V) + \bm{1}(1-tr(\V))}{3 - tr(\V)}\\
    &= \frac{1}{\theta^2}\begin{bmatrix}
        \vvec_1^2\\
        \vvec_2^2\\
        \vvec_3^2
    \end{bmatrix}
\end{aligned}
\end{equation}
where $diag(\V)$ is a vector of the diagonal elements in $\V$, $tr(\V)$ is the trace of $\V$ and $\bm{1}$ is a 3-element vector of ones. Taking the square root of each element in $\bm{s}$ and multiplying by $\theta$ yields the absolute value of each element in $\vvec$.  Using the sign of the elements in  $\mathcal{B}_{ijk}\V_{jk}$ completes the $\Log$ computation.

To determine which method to use, the value of the trace of $\V$ is used.  When the value is closer to -1, the alternative method is used as it is more numerically stable.  By combining these two methods together, a numerically stable $\Log$ function is derived for all values of $\theta$.  Unfortunately, the derivative of both of these techniques is numerically unstable as $\theta\rightarrow\pi$.  


\subsection{\texorpdfstring{\boldsymbol$\pfrac{\Exp(\vvec)}{\vvec}$}{dV/dv}}
To compute the derivative of $\V=\Exp(\vvec)$ as defined in Equation~\eqref{eq:deriveExpSO3} w.r.t. $\vvec$, we will define the derivative of $\theta$, $\bm{S}$, and $\bm{S}^2$ w.r.t. $\vvec$ and then combine these results with the chain and product rules:
\begin{align}
        \notag\theta &= \sqrt{\vvec_i\vvec_i}\\\notag
        \pfrac{\theta}{\bm{v}_i} &= \frac{\bm{v}_i}{\theta}\\\notag
        \bm{S}_{jk} &= \vvec_i\mathcal{B}_{ijk}\\\notag
        \pfrac{\bm{S}_{jk}}{\bm{v}_i} &= \mathcal{B}_{ijk}\\
        (\bm{S}^2)_{jl} &=  \bm{S}_{jk}\bm{S}_{kl} = (\bm{v}_i\mathcal{B}_{ijk})(\bm{v}_i\mathcal{B}_{ikl})\\\notag
        \pfrac{\bm{S}^2_{jl}}{\bm{v}_i} &= \mathcal{B}_{ijk}\bm{S}_{kl} + \bm{S}_{jk}\mathcal{B}_{ikl}\\\notag
        \pfrac{exp(\bm{S})}{\theta} &= \left(\frac{\cos\theta}{\theta} - \frac{\sin\theta}{\theta^2}\right)\bm{S} + \left(\frac{\sin\theta}{\theta^2} -2\frac{1-\cos\theta}{\theta^3}\right)\bm{S}^2\\\notag
        \pfrac{\V_{jk}}{\vvec_i} &= \pfrac{exp(\bm{S})_{jk}}{\theta}\pfrac{\theta}{\bm{v}_i} + \frac{\sin\theta}{\theta}\pfrac{\bm{S}_{jk}}{\bm{v}_i} + \frac{1-\cos\theta}{\theta^2}\pfrac{\bm{S}_{jk}^2}{\bm{v}_i}
\end{align}

\onecolumn
\center{\Large Graphical Abstract for Review}
\\[1cm]

\includegraphics[width=\textwidth]{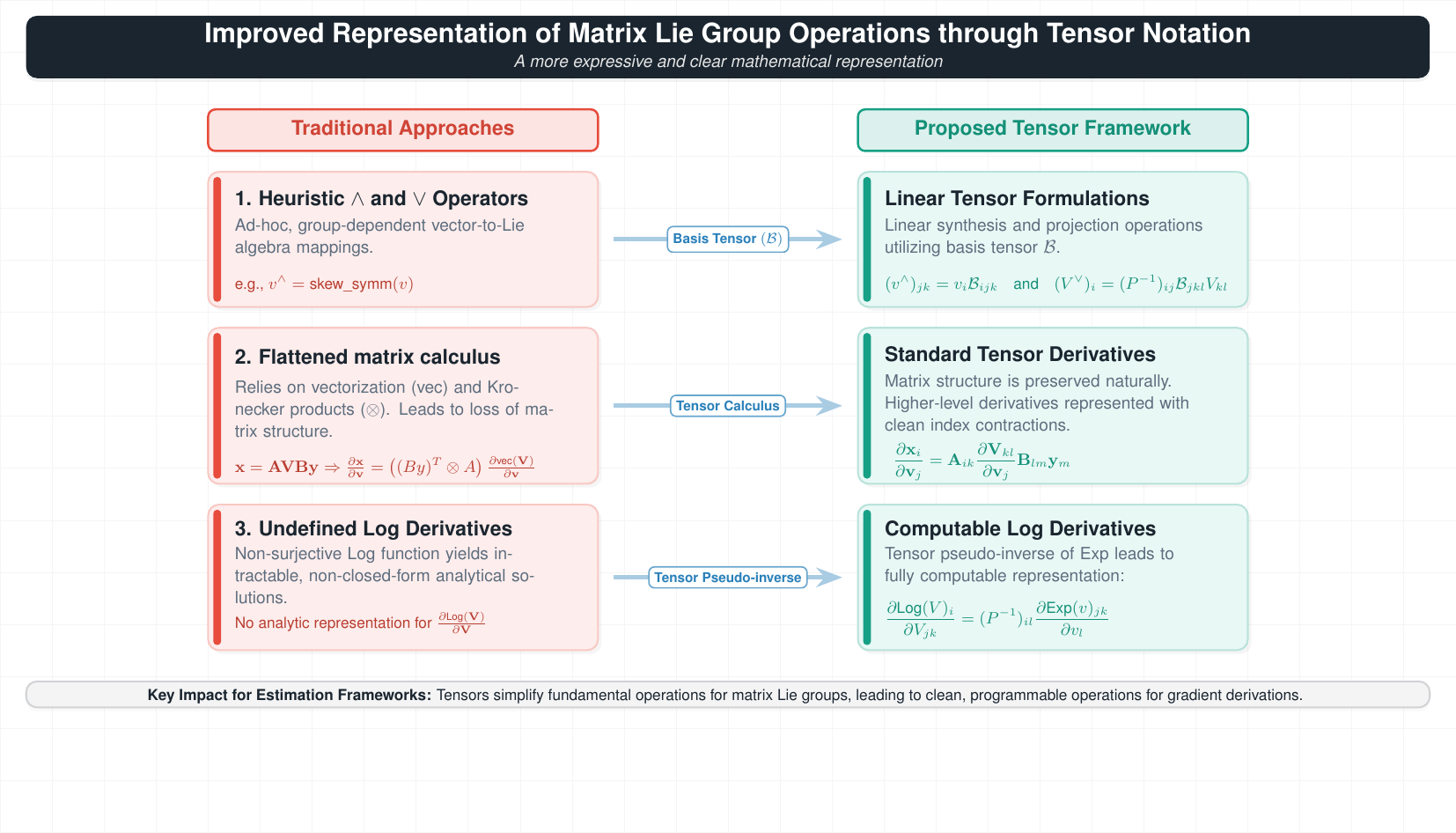}
\end{document}